\theoremstyle{plain}
\newtheorem{theorem}{Theorem}[section]
\newtheorem{lemma}[theorem]{Lemma}
\theoremstyle{definition}
\theoremstyle{remark}
\newtheorem{conjecture}[theorem]{Conjecture}
\icmltitlerunning{Invariant Layers for Graphs with Nodes of Different Types}
\begin{document}

\twocolumn[
\icmltitle{Invariant Layers for Graphs with Nodes of Different Types}



\icmlsetsymbol{equal}{*}

\begin{icmlauthorlist}
\icmlauthor{Dmitry Rybin}{yyy}
\icmlauthor{Ruoyu Sun}{yyy}
\icmlauthor{Zhi-Quan Luo}{sch}
\end{icmlauthorlist}

\icmlaffiliation{yyy}{School of Data Science, The Chinese University of Hong Kong, Shenzhen}
\icmlaffiliation{sch}{Shenzhen Research Institute of Big Data, The Chinese University of Hong Kong, Shenzhen}

\icmlcorrespondingauthor{Dmitry Rybin}{dmitryrybin@link.cuhk.edu.cn}

\icmlkeywords{graph neural network, expressive power, invariant, tensor, permutation}

\vskip 0.3in
]



\printAffiliationsAndNotice{}  

\begin{abstract}
Neural networks that satisfy invariance with respect to input permutations have been widely studied in machine learning literature. However, in many applications, only a subset of all input permutations is of interest. For heterogeneous graph data, one can focus on permutations that preserve node types. We fully characterize linear layers invariant to such permutations. We verify experimentally that implementing these layers in graph neural network architectures allows learning important node interactions more effectively than existing techniques. We show that the dimension of space of these layers is given by a generalization of Bell numbers, extending the work \cite{Maron2018}. We further narrow the invariant network design space by addressing a question about the sizes of tensor layers necessary for function approximation on graph data. Our findings suggest that function approximation on a graph with $n$ nodes can be done with tensors of sizes $\leq n$, which is tighter than the best-known bound $\leq n(n-1)/2$. For $d \times d$ image data with translation symmetry, our methods give a tight upper bound $2d - 1$ (instead of $d^{4}$) on sizes of invariant tensor generators via a surprising connection to Davenport constants.
\end{abstract}

\section{Introduction}
\label{introduction}
The study of invariant and equivariant neural networks has been gaining popularity in recent years. Many fundamental properties, such as universal approximation theorems \cite{Maron2019}, \cite{yarotsky, ravanbakhsh2020}, have been proved. The design of expressive invariant layers remains an important direction in Deep Learning \cite{Hartford2018, kondor2018}.

Permutation invariant networks are an important special case. In these networks, the symmetry group of the input data is given by all possible permutations of input coordinates. In particular, such symmetry appears in the use of graph neural networks \cite{kipf2016}, where the invariance comes from the permutation of nodes. This symmetry is crucial for architecture design in graph neural networks and the study of their expressive power and universal approximation properties \cite{Chen2019, frasca2022, garg2020, Huang2022, xu2018, bevilacqua2022, qian2022}. However, node permutations in homogeneous and heterogeneous graphs have certain differences that received little attention in the literature.

In applications with heterogeneous graphs, the problem background often requires certain groups of nodes to have significantly different properties or represent objects of different nature. Examples of graph applications with many node types can be found in recommendation systems \cite{wu2020}, chemistry \cite{reiser2022}, and Learn-to-Optimize \cite{gasse2019}. Many Graph Neural Network architectures attempt to capture the relations between nodes of different types. Despite many theoretical guarantees \cite{glass}, some simple features are still hard to learn in practice with existing layers, see Section \ref{experiments} for experimental evidence. In this paper, we aim to fix this gap by characterizing all invariant linear layers for permutations preserving node types, hence extending the work \cite{Maron2018} to heterogeneous graphs.

\setlength{\tabcolsep}{0.3em}
\begin{table}[H]
\centering
\begin{tabular}{|c|c|c|c|}
\hline
Tensor sizes & 1 & 2 & 3 \\ \hline
\cite{Maron2018} & 1 & 2 & 5 \\ \hline
\textbf{This work} & $m$ & $m^{2} + m$ & $m^{3} + 3m^{2} + m$ \\ \hline
\end{tabular}
\caption{The dimension of space of invariant linear layers in graphs with $m$ node types. Setting $m = 1$ recovers results from \cite{Maron2018}.}
\label{comparison}
\end{table}
\setlength{\tabcolsep}{0.5em}

\setlength{\tabcolsep}{0.1em}
\begin{table}[H]
    \centering
    \begin{tabular}{|c|c|c|}
        \hline
        Ref. & \cite{Maron2018} & Theorem \ref{labeldim} \\ \hline
         & Classification of &  Classification of \\ 
        Results & $S_{n}$-invariant  & $S_{n_1} \times ... \times S_{n_m}$- \\ 
         & linear layers & -invariant linear layers \\
         & $\mathbb{R}^{n^{k}} \to \mathbb{R}$. & $\mathbb{R}^{n^{k}} \to \mathbb{R}$. \\ \hline
    \end{tabular}
    
    \caption{Comparison of theoretical contribution to the past work.}
    \label{tab:comparison}
\end{table}
\setlength{\tabcolsep}{0.5em}

In Section \ref{classification}, Theorem \ref{labeldim}, we provide a complete characterization of linear layers $\mathbb{R}^{n^{k}} \to \mathbb{R}$ with $k$-tensor input, invariant to permutations from $S_{n_1} \times S_{n_2} \times ... \times S_{n_m}$, where $n_1 + ... + n_m = n$ is the number of nodes and $m$ is the number of different types of nodes. The dimension of the complete space of these layers is given by a generalization of Bell numbers, as can be seen in Table \ref{tab:comparison}. The fact that the structure of invariant layers depends only on the number of node types $m$ and tensor size $k$, and does not depend on $n$, is crucial for the re-use of these layers in graph neural networks. It follows that these layers can be directly applied to any input graph with $m$ types of nodes, independent of the number of nodes. We provide an explicit orthogonal basis and implementation description for these layers (Theorem \ref{explicitform}).

A complete characterization of invariant/equivariant tensor layers defines a design space for invariant neural networks. It was discovered \cite{Maron2019, ravanbakhsh2020, keriven2019, OpenProblemsMaron2019} that higher-order tensors are necessary for function approximation with invariant/equivariant neural networks. However, explicit bounds on required tensor sizes are needed.

The work \cite{Maron2019} showed that $n(n-1)/2$-tensors are sufficient for function approximation on graph data with $n$ nodes. Lowering this bound is an open question posed in \cite{Maron2019} and \cite{keriven2019}. In Section \ref{maximal_tensor}, we provide several theorems and conjectures suggesting a bound below $n$. Furthermore, we show how the structure of orbits of the graph automorphism group determines how tensor sizes can be lowered. Our findings are formulated as conjectures A and B. We summarize the implications of each conjecture in Table \ref{tab:implications}. The mathematical formulation of the conjectures is provided in Section \ref{maximal_tensor}. Some applications, such as image graphs in computer vision, provide special cases that can be analyzed fully. We prove that translation-invariant function approximation on $d \times d$ images can be done with tensors of size $\leq 2d - 1$ (Theorem \ref{cnn}). The proof makes a surprising connection between translation-invariant tensor layers and Davenport constants \cite{davenport}.

\begin{table}[H]
    \centering
    \begin{tabular}{|c|c|}
        \hline
        Result & Description \\ \hline
         & A decrease of tensor sizes in CNN \\
        Theorem \ref{cnn} &  for translation-invariant function \\
         &  approximation from $d^4$ to $2d - 1$ \\ \hline
         & A decrease of required tensor sizes  \\
        Conjecture A &  from $n(n-1)/2$ to $n$ for function  \\
         &  approximation on graphs with $n$ nodes. \\ \hline
        Conjecture B & Graph instance-dependent bound  \\ 
         & on required tensor sizes. \\
        \hline
    \end{tabular}
    \caption{Implications of the proposed conjectures and theorems.}
    \label{tab:implications}
\end{table}

Finally, we discuss the differences between our work and prior results. Due to practical importance, treating different types of nodes has been approached in many applications. For bipartite graphs, half-GNN \cite{gasse2019} and EvenNet were proposed \cite{lei2022}. Layers in half-GNN can be viewed as a special case of ours. For extracting properties of a subset of vertices, subgraph structure extraction (Sub-GNN) \cite{alsentzer2020}, labeling tricks (GLASS) \cite{wang2022}, and other approaches \cite{sun2021, you2021, kexin2020} were used. Subgraph data pooling in Sub-GNN and GLASS is an example of a layer invariant only to permutations of nodes within a subgraph and hence is a special case of our layers. Tensor sizes for function approximation with convolutional networks were analyzed in \cite{yarotsky}. As pointed out in \cite{yarotsky}, finding a small explicit set of generators of translation-invariant tensors is not trivial. Therefore the work \cite{yarotsky} took an alternative approach of averaging the outputs over the whole symmetry group. We bypassed the difficulty by making a change of basis and noting a connection with zero-sum sequences in groups (proof of Theorem \ref{cnn}).

\section{Preliminaries}
\label{preliminaries}

A function $f: \mathbb{R}^{n} \to \mathbb{R}$ is invariant to a permutation $P$ if for any input $x \in \mathbb{R}^{n}$ we have
$$f(x_1, x_2, ..., x_n) = f(x_{P(1)}, x_{P(2)}, ..., x_{P(n)}).$$
For linear functions $f(x) = a_1x_1 + ... + a_n x_n$ this condition is equivalent to a fixed point equations \cite{Maron2018}, which can be solved by analyzing orbits of indices,
$$a_1x_1 + a_2x_2 + ... + a_nx_n = a_1x_{P(1)} + a_2x_{P(2)} + ... + a_nx_{P(n)}.$$
For tensor input data, $x \in \mathbb{R}^{n^{k}}$, coordinates are indexed by $k$-tuples $x_{i_1, ..., i_k}$, where $i_1, ..., i_k \in \{1, 2, ..., n\}$. Permutation $P$ now acts on all elements in a tuple, simultaneously permuting indices in all $k$ axes of a $k$-tensor. For a linear map $f: \mathbb{R}^{n^{k}} \to \mathbb{R}$, invariance to permutation $P$ is equivalent to a fixed point equation
$$\sum a_{i_1, ..., i_k} x_{i_1, ..., i_k} = \sum a_{i_1, ..., i_k} x_{P(i_1), ..., P(i_k)}.$$
While for a linear map between $k$-tensors and $d$-tensors $f: \mathbb{R}^{n^{k}} \to \mathbb{R}^{n^{d}}$, equivariance to permutation $P$ is stated as
$$f(Px) = Pf(x).$$

In the fundamental work \cite{Maron2018}, linear maps $\mathbb{R}^{n^{k}} \to \mathbb{R}$ invariant to all possible $n!$ permutations $P$ were explicitly classified. The dimension of space of these maps was shown to be given by Bell numbers $B(k)$. This classification is important since it provides a complete design space for invariant and equivariant neural networks.

Recall that a standard model for the invariant neural network is a function 
$$F : \mathbb{R}^{n} \rightarrow \mathbb{R}$$ 
defined as
\begin{equation}\label{invariant_network}
F(x) = M \circ h \circ L_d \circ \sigma \circ \cdots \circ \sigma \circ L_1,  
\end{equation}
where $L_i: \mathbb{R}^{n^{k_i} \times a_i} \to \mathbb{R}^{n^{k_{i + 1}} \times a_{i+1}}$ are linear equivariant layers (quantity $a_i$ is the number of filters or channels in layer $i$), $\sigma$ is an activation function (such as ReLU or sigmoid), $h: \mathbb{R}^{n^{k_{d + 1}} \times a_{d + 1}} \to \mathbb{R}^{m}$ is an invariant layer, and $M$ is a multi-layer perceptron. Here layers $L_i$ are equivariant to a predefined set of permutations $P$. While layer $h$ is invariant to that pre-defined set of permutations. For our applications we consider all permutations from $S_{n_1} \times S_{n_2} \times ... \times S_{n_{m}}$, as explained below.

\begin{figure}[H]
    \begin{minipage}{0.45\textwidth}
        \centering
        \includegraphics[width=0.9\textwidth]{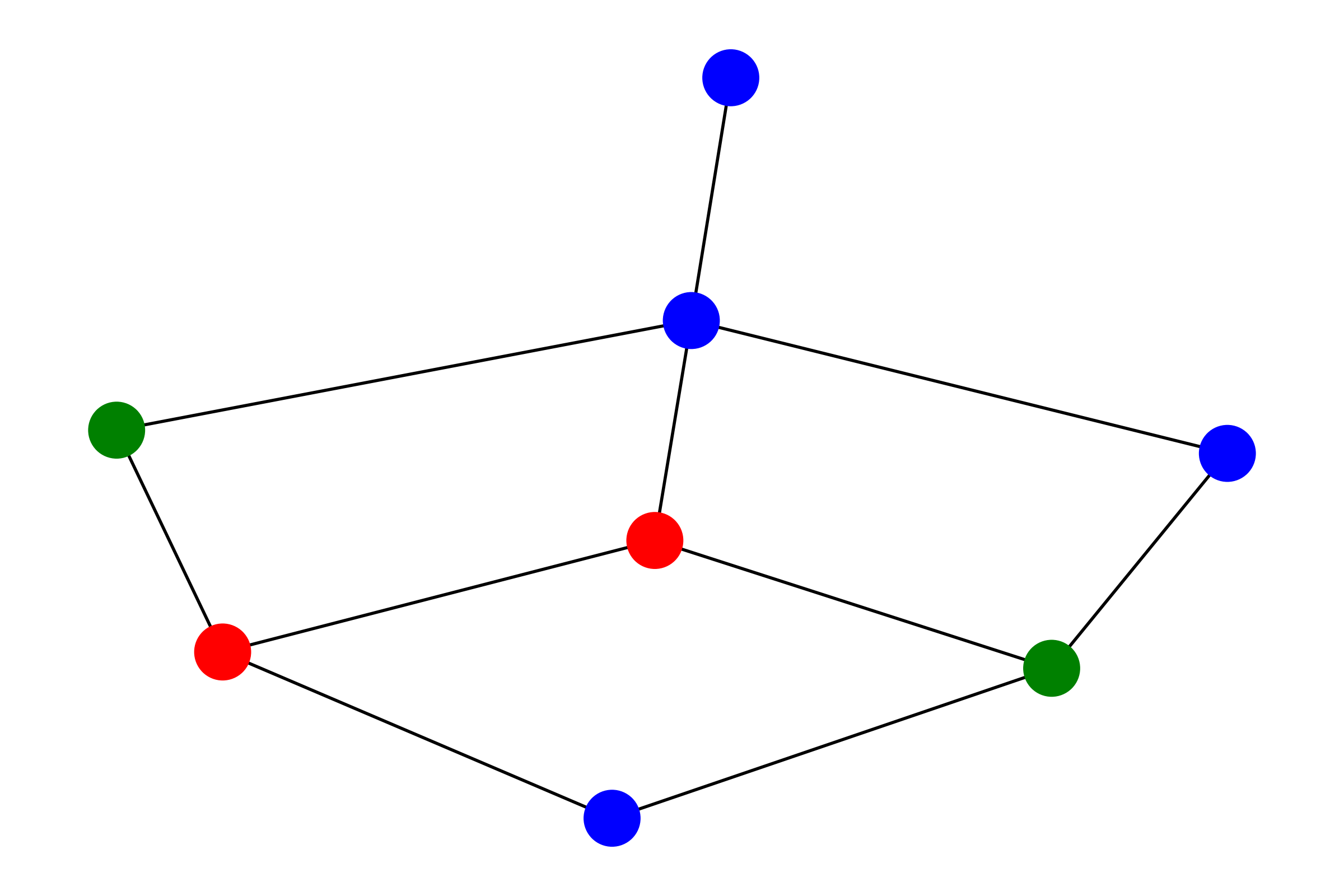}
    \end{minipage}
    \caption{Graph with nodes of three different types represented by colors. Properties shared by nodes of one specific type are hard to capture by a global aggregation.}
    \label{graphs_examples}
\end{figure}

\begin{figure}[H]
    \begin{minipage}{0.45\textwidth}
        \centering
        \includegraphics[width=0.3\textwidth]{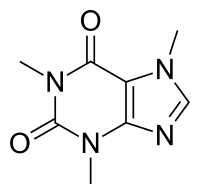}
    \end{minipage}
    \caption{A molecular graph with several types of nodes represented by different atoms and connections.}
    \label{graphs_molecule}
\end{figure}

Consider a graph with $n$ nodes of different types: $n_1$ nodes of type $1$, $n_2$ nodes of type $2$, ...., $n_m$ nodes of type $m$, $n_1 + ... + n_m = n$ (see Figures \ref{graphs_examples} and \ref{graphs_molecule}). For general graph-level tasks aggregation is performed over all $n$ nodes equivalently. Such aggregation operation guarantees invariance of the output to all $n!$ permutations from $S_n$. However, to capture properties shared by one type of node, it is viable to use aggregations only within nodes of the same type. Such aggregations would only preserve symmetries from a subgroup that permutes nodes of the same type, $S_{n_1} \times S_{n_2} \times ... \times S_{n_{m}} \subset S_{n}$.

For a special case of two node types, orthogonal bases of the new invariant linear layers are illustrated in Figures \ref{fig:tensor2} and \ref{fig:tensor3}. A map $\mathbb{R}^{n^{k}} \to \mathbb{R}^{n^{d}}$ is represented by a $(k+d)$-tensor. In particular, a map $\mathbb{R}^{n} \to \mathbb{R}^{n}$ in Figure \ref{fig:tensor2} is given by a matrix. Assume that $K_i$ is a set of nodes of type $i$ for $i = 1, 2$. Then the six maps in the Figure \ref{fig:tensor2} $L:\mathbb{R}^{n^{2}} \to \mathbb{R}$  are
$$L(x) = \sum_{i \in K_1} x_{i, i},$$
$$L(x) = \sum_{i \in K_2} x_{i, i},$$
$$L(x) = \sum_{i \neq j \in K_1} x_{i, j},$$
$$L(x) = \sum_{i \neq j \in K_2} x_{i, j},$$
$$L(x) = \sum_{i \in K_1, j \in K_2} x_{i, j},$$
$$L(x) = \sum_{i \in K_2, j \in K_1} x_{i, j}.$$
Note that relations such as ``number of edges between nodes of type $1$ and $2$'' are easier to capture with this set of maps.
\begin{figure}[H]
    \centering
    \includegraphics[width=0.45\textwidth]{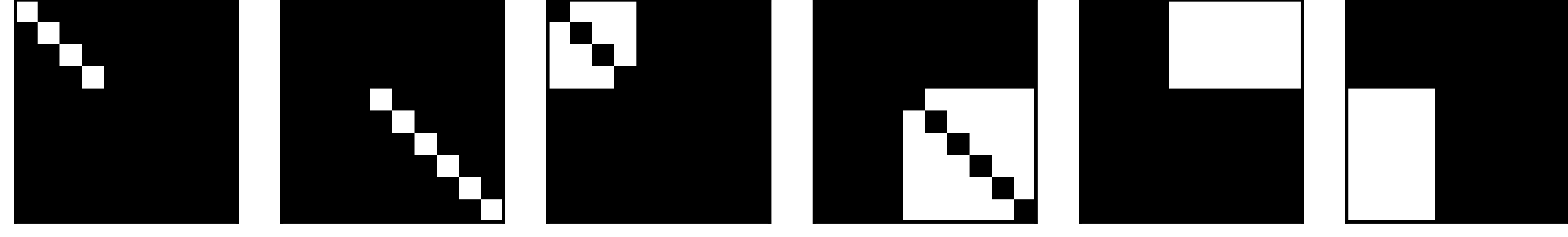}
    \caption{All six equivariant layers $\mathbb{R}^{n} \to \mathbb{R}^{n}$ for graphs with two types of nodes. White-colored entries are equal to $1$.}
    \label{fig:tensor2}
\end{figure}

\begin{figure}[H]
    \centering
    \includegraphics[width=0.45\textwidth]{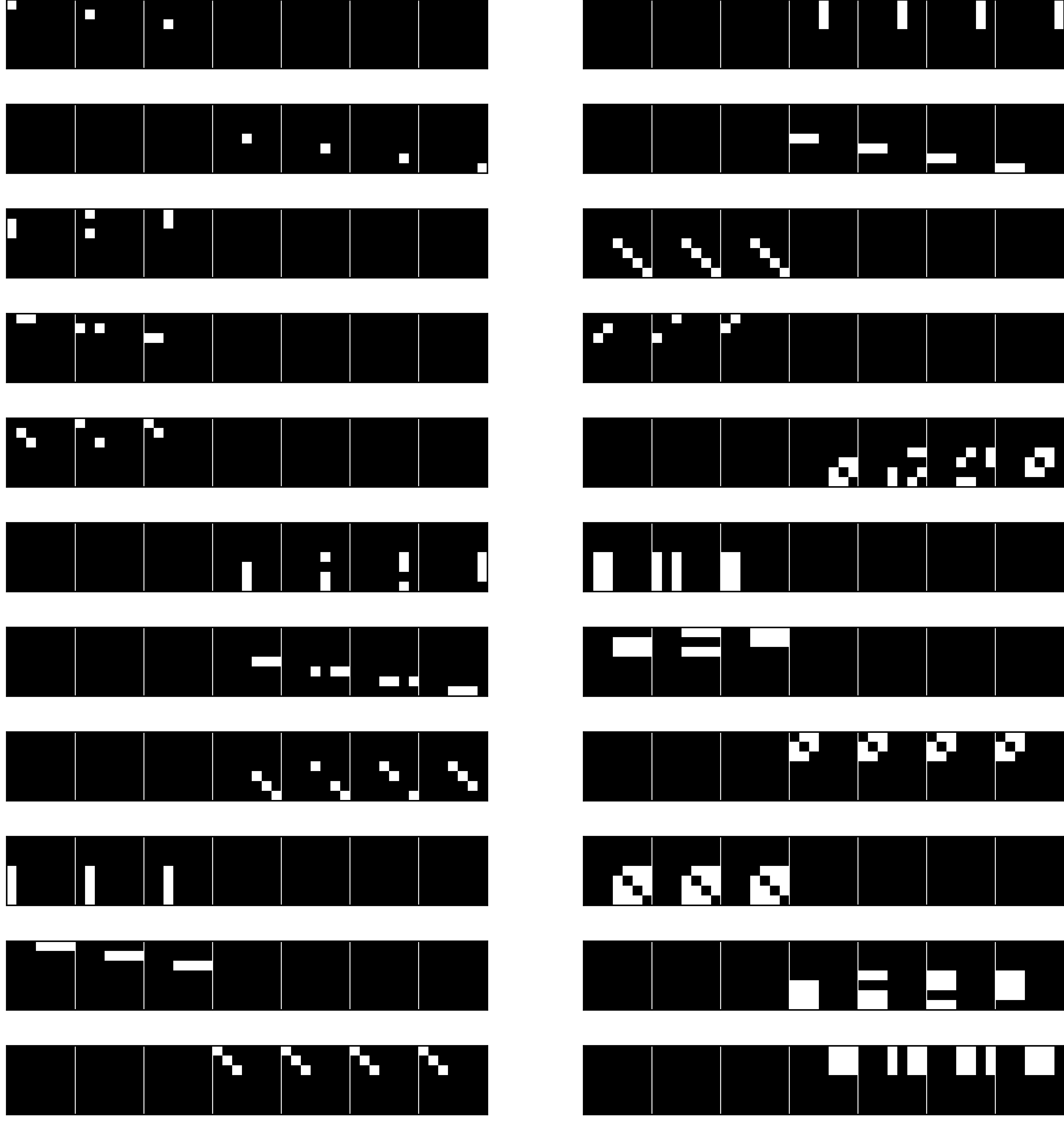}
    \caption{All 22 equivariant layers $\mathbb{R}^{n^{2}} \to \mathbb{R}^{n}$ for graphs with two types of nodes.}
    \label{fig:tensor3}
\end{figure}

Graph data can be encoded using tensors. Node features are 1-tensors, while edge features are 2-tensors. Higher-order tensors correspond to hypergraph data such as hyper-edges \cite{Maron2018}, or non-trivial structures such as bags of rooted sub-graphs \cite{frasca2022}. Symmetries preserved by those tensors are related to the symmetries of the underlying graph.

Let $G$ be a graph with $n$ vertices, and $x$ be a feature vector $x = (x_1, ..., x_n) \in \mathbb{R}^{n}$. Where $x_i$ is a scalar feature of node $i$. If $f$ is a graph function,
$$f: \mathbb{R}^{n} \to \mathbb{R},$$
then by definition, $f$ must respect graph symmetries, i.e.
$$f(x) = f(\sigma(x)), \qquad \forall \sigma \in  \mathrm{Aut}\; G.$$
Here $\mathrm{Aut}\; G$ is a symmetry (automorphism) group of $G$. It is defined as a subgroup of all node permutations $S_n$ that preserve the graph structure
$$\mathrm{Aut}\; G = \{\sigma \in S_n | (i,j) \in E \Longleftrightarrow (\sigma(i), \sigma(j)) \in E \}.$$

For example, for the graph in Figure \ref{fig:sym}, any graph function $f(x_1, x_2, x_3, x_4)$ must be invariant to all permutations of variables $x_1, x_2, x_3$. The reason is that these permutations produce the same graph structure, and hence any function $f$ that depends only on graph structure must give the same output after such permutation.

For a general subgroup of $S_n$, the function approximation properties of tensor layers are discussed in \cite{Maron2019}. However, the preservation of graph structure puts a significant restriction on the subgroup $\mathrm{Aut}\; G \subset S_n$. This restriction can lead to a bound that is significantly smaller than $n(n-1)/2$.

\begin{figure}[H]
    \centering
    \includegraphics[width=0.45\textwidth]{sym_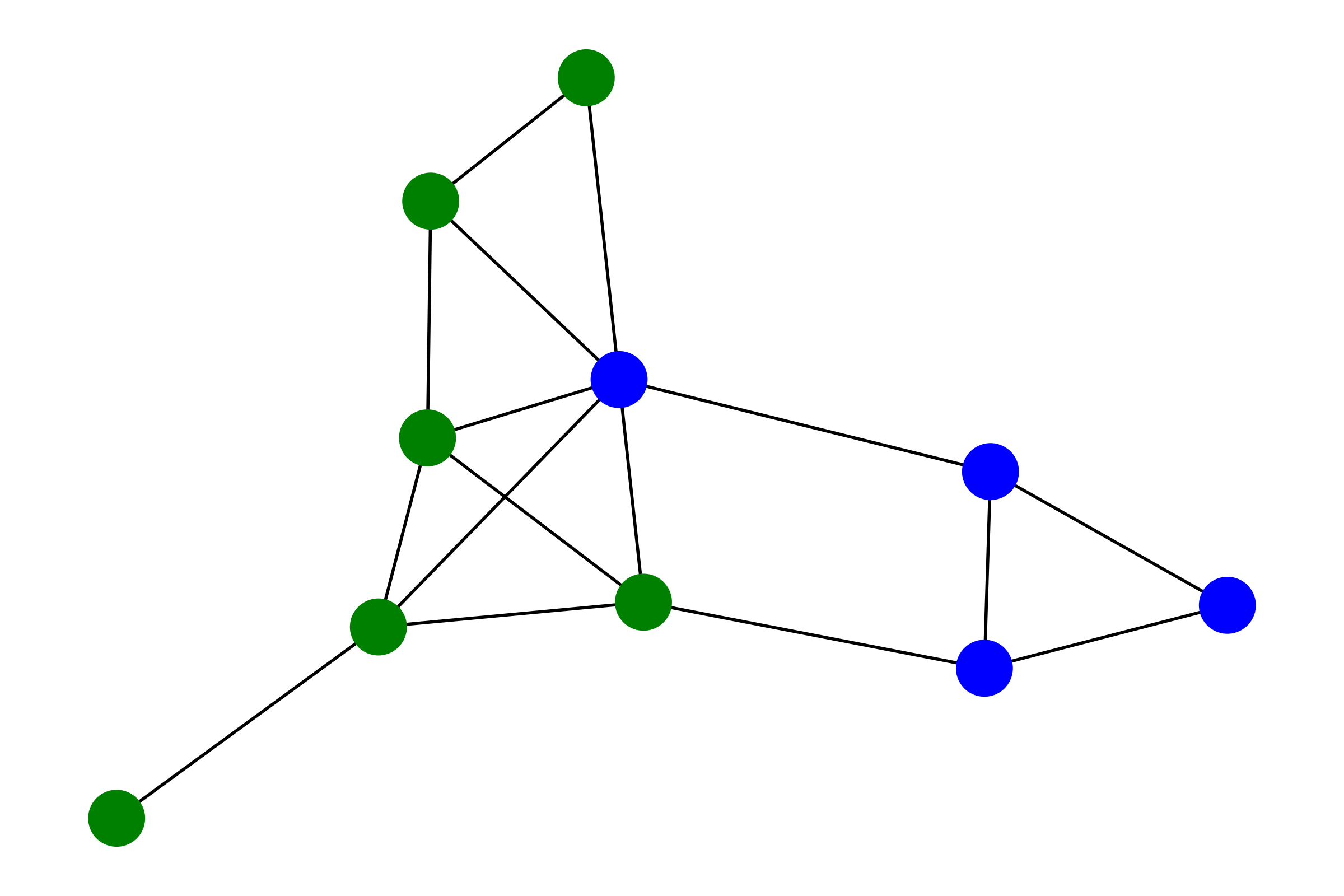}
    \caption{Graph with symmetry group of order $6$ that consists of all $3!$ permutations of nodes with with features $x_1, x_2, x_3$}
    \label{fig:sym}
\end{figure}

For a given graph $G$ with $n$ nodes define an integer $T(G)$ as the smallest size of tensors that allows invariant neural network to approximate any $\mathrm{Aut}\; G$-invariant function $f: \mathbb{R}^{n} \to \mathbb{R}$. That is, for any continuous $\mathrm{Aut}\; G$-invariant function $f$, any compact set $K \subset \mathbb{R}^{n}$ and any $\epsilon > 0$ there should exist an invariant neural network $F$ with tensor sizes $\leq T(G)$ such that
$$\max_{x \in K} |F(x) - f(x)| < \epsilon.$$

\begin{center}
\textbf{Open Question:}
Can the bound $T(G) \leq n(n - 1)/2$ be improved?
\end{center}

We note that the work \cite{Maron2019} connected the quantity $T(G)$ to degrees of polynomial generators of ring of invariants $\mathbb{R}[x_1 ,..., x_n]^{\mathrm{Aut}\; G}$.

\section{Classification of Invariant Layers}\label{classification}
\label{approach}
Recall that a linear map $f: \mathbb{R}^{n^{k}} \to \mathbb{R}$ can be written as 
$$f(x) = \sum a_{i_1, ..., i_k} x_{i_1, ..., i_k}.$$
The condition that a map $f$ is invariant to a subgroup $G \subset S_n$ of permutations is equivalent to a set of fixed points equations
$$\sum a_{i_1, ..., i_k} x_{i_1, ..., i_k} = \sum a_{i_1, ..., i_k} x_{P(i_1), ..., P(i_k)},$$
$$\forall P \in G.$$
Solving these equations can be reduced to a certain technical calculation from the branch of invariant theory and representation theory \cite{Fulton2004}. We provide this calculation in appendix A for permutations preserving node types (Theorem \ref{labeldim}), cyclic shifts (Theorem \ref{cyclic}), and translations (Theorem \ref{cnn_tensor}).

\begin{theorem}\label{labeldim}
If a graph contains nodes of $m$ different types, then the dimension of space of invariant layers $\mathbb{R}^{n^k} \to \mathbb{R}$ is given by the coefficient in front of $x^{k}/k!$ in the expression
$$e^{m(e^{x} - 1)} =$$
$$1 + m\frac{x}{1!} + (m^{2} + m)\frac{x^{2}}{2!} + (m^{3} + 3m^{2} + m)\frac{x^{3}}{3!} + ...$$
The dimension of space of equivariant layers $\mathbb{R}^{n^k} \to \mathbb{R}^{n^d}$ is given by the coefficient in front of $x^{k + d} / (k + d)!$.
\end{theorem}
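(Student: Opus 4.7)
The plan is to reduce the dimension count to an orbit counting problem and then recognize the result as a classical exponential generating function identity. Starting from the fixed point equations
$$\sum a_{i_1,\dots,i_k} x_{i_1,\dots,i_k} = \sum a_{i_1,\dots,i_k} x_{P(i_1),\dots,P(i_k)}$$
for all $P \in S_{n_1} \times \cdots \times S_{n_m}$, one sees that a linear map $\mathbb{R}^{n^k} \to \mathbb{R}$ is invariant if and only if its coefficient tensor $a$ is constant on orbits of the diagonal action of $G := S_{n_1} \times \cdots \times S_{n_m}$ on $\{1,\dots,n\}^k$. The dimension we want is therefore exactly the number of $G$-orbits on $k$-tuples of indices, and the indicator tensors of these orbits supply an explicit orthogonal basis (this will also feed into Theorem~\ref{explicitform}).

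Next I would parametrize the orbits combinatorially. Two tuples $(i_1,\dots,i_k)$ and $(j_1,\dots,j_k)$ lie in the same $G$-orbit if and only if (a) they have the same equality pattern, i.e.\ $i_a = i_b \iff j_a = j_b$, which determines a set partition $\pi$ of $\{1,\dots,k\}$, and (b) for each block $B$ of $\pi$ the common index $i_B$ belongs to the same type class $K_t$ in both tuples, which determines a coloring $c\colon \pi \to \{1,\dots,m\}$. Conversely, every pair $(\pi,c)$ is realized by some tuple, provided each node type has enough representatives to accommodate a block of that color; concretely, it suffices that $n_t$ is at least the number of blocks of color $t$, and a uniform sufficient condition is $n_t \ge k$ for all $t$. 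Under this mild regime the number of orbits equals $\sum_{\pi \vdash [k]} m^{|\pi|}$, which makes it independent of the particular $n_t$ and justifies re-using the layers across different graphs.

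The final step is to recognize the generating function. Writing $S(k,\ell)$ for the Stirling numbers of the second kind, the orbit count is the generalized Bell polynomial $B_m(k) = \sum_{\ell=0}^{k} S(k,\ell)\, m^{\ell}$. Since $\sum_{k \ge 0} S(k,\ell)\, x^k/k! = (e^x - 1)^\ell / \ell!$, summing over $\ell$ and interchanging sums gives
$$\sum_{k \ge 0} B_m(k)\, \frac{x^k}{k!} = \sum_{\ell \ge 0} \frac{m^\ell (e^x - 1)^\ell}{\ell!} = e^{m(e^x - 1)},$$
which is exactly Touchard's identity applied to $m(e^x - 1)$, and it reproduces the tabulated coefficients $m$, $m^2+m$, $m^3+3m^2+m$ for $k = 1,2,3$.

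For the equivariant statement, identify a linear map $L\colon \mathbb{R}^{n^k} \to \mathbb{R}^{n^d}$ with its coefficient $(k+d)$-tensor $a_{i_1,\dots,i_k,\, j_1,\dots,j_d}$ via $(L x)_{j_1,\dots,j_d} = \sum a_{i_1,\dots,i_k,\, j_1,\dots,j_d}\, x_{i_1,\dots,i_k}$. The equivariance condition $L(Px) = P\, L(x)$ for $P \in G$ translates directly into invariance of the $(k+d)$-tensor under the diagonal $G$-action on $\{1,\dots,n\}^{k+d}$, so the equivariant dimension equals the invariant dimension with $k$ replaced by $k+d$, which is the coefficient of $x^{k+d}/(k+d)!$ in the same exponential. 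The main obstacle in the whole argument is the orbit parametrization step: one must verify carefully that the bijection with pairs $(\pi,c)$ is exact and that the answer stabilizes once the type multiplicities are large enough, since otherwise the formula would be only an upper bound rather than the true dimension.
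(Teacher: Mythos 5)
Your proof is correct, and it takes a genuinely more elementary route than the paper's. The paper decomposes $\mathbb{R}^{n} = U_1 \oplus \cdots \oplus U_m$ by node type, expands the $k$-th tensor power by the multinomial theorem, invokes the result of \cite{Maron2018} as a black box to get a Bell number $B(k_j)$ for each factor $U_j^{\otimes k_j}$, and then assembles the count $\sum_{k_1+\cdots+k_m=k}\binom{k}{k_1,\dots,k_m}B(k_1)\cdots B(k_m)$ into the exponential $e^{m(e^{x}-1)}$ via the product rule for exponential generating functions (Lemma \ref{suppl1}); the equivariant case is handled by a character-theoretic self-duality argument (Lemma \ref{suppl2}). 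You instead count orbits of the diagonal $S_{n_1}\times\cdots\times S_{n_m}$-action on index tuples directly, parametrize them by set partitions of $\{1,\dots,k\}$ whose blocks are colored by node types, obtain $\sum_{\ell} S(k,\ell)\,m^{\ell}$, and recognize the generating function through the Stirling-number identity; your equivariant reduction is a direct coordinate computation using orthogonality of permutation matrices rather than character theory. The two counts agree (a type-colored partition is exactly a choice of which positions carry which type together with a partition within each type class), so the approaches converge, but yours is self-contained at the combinatorial level and, as a bonus, also yields the orthogonal orbit-indicator basis of Theorem \ref{explicitform} along the way, whereas the paper's route generalizes more readily to other groups by swapping in a different per-factor invariant count. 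Your explicit caveat that the formula is the true dimension only when each $n_t$ is at least the number of blocks of its color (uniformly, $n_t \ge k$) is a point the paper leaves implicit — the same stabilization hypothesis already underlies the $B(k)$ formula of \cite{Maron2018} that the paper's proof relies on — so flagging it is a strength, not a discrepancy.
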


\begin{table}[H]
    \centering
    \begin{tabular}{|c|c|}
        \hline
        Space & Dimension of Invariant Subspace \\ \hline
        1-tensors & $m$ \\ \hline
        2-tensors & $m^{2} + m$ \\ \hline
        3-tensors & $m^{3} + 3m^{2} + m$ \\ \hline
        4-tensors & $m^{4} + 6m^{3} + 7m^{2} + m$ \\ \hline
    \end{tabular}
    \caption{The obtained sequence of dimensions of invariant subspace for graphs with $m$ node types can be viewed as a generalization of Bell numbers.}
    \label{tab:tensors}
\end{table}

We provide an explicit basis for classified invariant layers.
\begin{theorem}\label{explicitform}
An orthogonal basis in space of $S_{n_1} \times  ... \times S_{n_{m}}$-invariant tensor layers $\mathbb{R}^{n^{k}} \to \mathbb{R}$ is given by the following set. For every disjoint partition
$$\bigsqcup_{j=1}^{m} T_j = \{1, ..., k\},$$
and for every tuple $(e_{\gamma_1}, ..., e_{\gamma_{m}})$, where each $B_{\gamma_j}$ is a basis vector in space of $S_{|T_{j}|}$-invariant layers $\mathbb{R}^{n^{|T_j|}} \to \mathbb{R}$, as in Theorem 1 from \cite{Maron2018}, form a vector
$$B_{\gamma_{1}, ..., \gamma_{m}}$$
by setting the coefficient in front of $e_{i_1, i_2, ..., i_k}$ to $1$ if and only if the coefficient in front of $e_{(i_s, s \in T_{j})}$ in $B_{\gamma_j}$ is $1$ for all $j$. Equivalently, $B_{\gamma_{1}, ..., \gamma_{m}}$ is a tensor product of $B_{\gamma_j}$ when put at appropriate indices.
\end{theorem}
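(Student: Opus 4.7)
The plan is to prove the theorem by identifying each constructed vector $B_{\gamma_1, \ldots, \gamma_m}$ with the indicator function of a single orbit of $S_{n_1} \times \cdots \times S_{n_m}$ on the index set $\{1, \ldots, n\}^k$, and then invoking the elementary fact that orbit indicators form an orthogonal basis for the fixed subspace of a permutation representation. Write $K_j \subset \{1, \ldots, n\}$ for the $n_j$ indices of node type $j$, so that $K_1 \sqcup \cdots \sqcup K_m = \{1, \ldots, n\}$.

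First I would parametrize the orbits. Because $S_{n_1} \times \cdots \times S_{n_m}$ permutes each $K_j$ only within itself, for any tuple $(i_1, \ldots, i_k)$ the assignment $s \mapsto \text{type}(i_s)$ is an orbit invariant; equivalently, the ordered partition $T_j = \{s : i_s \in K_j\}$ of $\{1, \ldots, k\}$ into $m$ labelled blocks is constant on each orbit. Conditioning on this partition, the residual action on the values factors as a product of the full symmetric groups $S_{n_j}$ acting on the $K_j$-valued sub-tuples $(i_s)_{s \in T_j}$. By Maron's classification for the homogeneous case, the orbits of each $S_{n_j}$-action are in bijection with set-partitions $\gamma_j$ of $T_j$ (the equality pattern of the sub-tuple). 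So each orbit corresponds uniquely to a datum $\bigl((T_1, \ldots, T_m), (\gamma_1, \ldots, \gamma_m)\bigr)$ of exactly the type indexing the claimed basis.

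Second, I would verify that $B_{\gamma_1, \ldots, \gamma_m}$ is precisely the indicator of this orbit. Unpacking the definition, the coefficient at $e_{i_1, \ldots, i_k}$ equals $1$ if and only if, for every $j$, the restriction $(i_s)_{s \in T_j}$ lies in the support of the Maron basis vector $B_{\gamma_j}$, i.e.\ it is a $K_j$-valued sub-tuple whose equality pattern is $\gamma_j$. The conjunction of these conditions is exactly the defining condition of the orbit, so the tensor product construction delivers the orbit indicator.

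Finally, distinct orbits have disjoint supports in the standard basis of $\mathbb{R}^{n^k}$, so the family $\{B_{\gamma_1, \ldots, \gamma_m}\}$ is pairwise orthogonal; it spans the invariant subspace because every $S_{n_1} \times \cdots \times S_{n_m}$-invariant vector is constant on orbits. As a sanity check, the cardinality $\sum_{(T_1, \ldots, T_m)} \prod_{j=1}^{m} B(|T_j|)$ equals, by the exponential formula, the coefficient of $x^k/k!$ in $\bigl(e^{e^x - 1}\bigr)^m = e^{m(e^x - 1)}$, matching Theorem~\ref{labeldim}. The main technical subtlety, and the principal obstacle to a clean write-up, is bookkeeping: one must interpret $B_{\gamma_j}$ consistently as a tensor living on the $K_j$-indexed coordinates (zero-extended off $K_j^{T_j}$), distinguish between the positions $T_j \subset \{1, \ldots, k\}$ and the values $K_j \subset \{1, \ldots, n\}$, and verify that under this reading the tensor product yields a single orbit indicator rather than a superposition.
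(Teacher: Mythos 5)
Your proof is correct, and it reaches the result by a genuinely more self-contained route than the paper. The paper constructs the same family of tensor-product vectors, proves pairwise orthogonality by essentially the support argument you use (a shared nonzero coefficient at $e_{i_1,\dots,i_k}$ forces the type pattern, hence $(T_1,\dots,T_m)$, and then each equality class $\gamma_j$), and then concludes the family is a basis by counting: it has $\sum_{k_1+\cdots+k_m=k}\binom{k}{k_1,\dots,k_m}B(k_1)\cdots B(k_m)$ elements, which matches the dimension of the invariant space established in Theorem \ref{labeldim}. You instead identify each $B_{\gamma_1,\dots,\gamma_m}$ as the indicator vector of a single orbit of $S_{n_1}\times\cdots\times S_{n_m}$ acting diagonally on $\{1,\dots,n\}^k$, and invoke the standard fact that orbit indicators form an orthogonal basis of the fixed subspace of a permutation representation; this gives spanning directly, with no appeal to Theorem \ref{labeldim} (your cardinality check in fact reproves that dimension formula), whereas the paper's argument is orthogonality plus dimension count. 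Your bookkeeping remark is also on target: for the orbit identification to hold, $B_{\gamma_j}$ must be read as a tensor supported on $K_j$-valued sub-tuples placed at the positions $T_j$ (the statement's ``$\mathbb{R}^{n^{|T_j|}}$'' is loose on this point), which is exactly what the paper's proof implicitly assumes when it recovers $T_j$ from the types of the entries $i_t$. The one caveat, shared equally by the paper and by the homogeneous case in Maron et al., is the implicit assumption that $n_j \ge |T_j|$ so that every partition $\gamma_j$ corresponds to a nonempty orbit; for very small node classes some listed vectors vanish and the count shrinks, but this does not affect the comparison between your argument and the paper's.
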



To illustrate the generality of the methods we develop, we also provide the results for cyclic permutations and translations. The proofs of the following theorems can be found in Appendix A.
\begin{theorem}\label{cyclic}
The dimension of space of cyclically invariant maps $\mathbb{R}^{n^{k}} \to \mathbb{R}$ is equal to $n^{k - 1}$.
\end{theorem}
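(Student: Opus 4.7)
The plan is to reduce the dimension count to an orbit count, mirroring the general strategy described in the Preliminaries. A linear map $L: \mathbb{R}^{n^{k}} \to \mathbb{R}$ is determined by coefficients $a_{i_1, \ldots, i_k}$, and cyclic invariance is equivalent to the fixed-point system
$$a_{i_1, \ldots, i_k} = a_{i_1 + d, \ldots, i_k + d}, \qquad d \in \mathbb{Z}/n\mathbb{Z},$$
with indices taken modulo $n$. Hence the dimension of the invariant subspace equals the number of orbits of the cyclic group $C_n$ acting diagonally on $\{1, \ldots, n\}^{k}$ via $(i_1, \ldots, i_k) \mapsto (i_1 + d, \ldots, i_k + d)$.

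The key observation is that this diagonal action is \emph{free}: a tuple $(i_1, \ldots, i_k)$ is fixed by the shift by $d$ iff $i_j + d \equiv i_j \pmod n$ for every $j$, which forces $d \equiv 0 \pmod n$. Consequently every orbit has size exactly $|C_n| = n$, and the number of orbits is $n^{k}/n = n^{k-1}$. For concreteness I would also exhibit explicit representatives: each orbit contains a unique tuple with $i_1 = 1$ (obtained by shifting by $1 - i_1$), and two such tuples in distinct orbits remain distinct because any nontrivial shift changes the first coordinate. This gives a bijection between orbits and $\{1\} \times \{1, \ldots, n\}^{k-1}$, and the corresponding orbit-sum indicator vectors
$$B_{i_2, \ldots, i_k} = \sum_{d=0}^{n-1} e_{1+d,\, i_2+d,\, \ldots,\, i_k+d}$$
(indices modulo $n$) form a natural basis of the invariant subspace.

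There is really no significant obstacle here, and the argument is strictly simpler than the proof of Theorem \ref{labeldim}: for invariance under $S_{n_1} \times \cdots \times S_{n_m}$ the diagonal action has nontrivial stabilizers and one needs the more intricate set-partition counting that produces the generating function $e^{m(e^{x}-1)}$, whereas here the freeness of the cyclic action collapses the orbit count to a single division. As a sanity check, Burnside's lemma yields the same answer instantly: only the identity has fixed points on $\{1, \ldots, n\}^{k}$, contributing $n^{k}$, so the number of orbits is $\tfrac{1}{n}(n^{k} + 0 + \cdots + 0) = n^{k-1}$.
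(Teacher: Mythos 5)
Your proof is correct, but it takes a genuinely different route from the paper. You reduce the dimension to the number of orbits of the diagonal $C_n$-action on index tuples, observe that this action is free (a shift by $d$ fixing a tuple forces $d \equiv 0 \bmod n$), and conclude there are $n^{k}/n = n^{k-1}$ orbits, with the orbit-sum indicators giving an explicit real basis; the Burnside check is consistent with this. The paper instead passes to $\mathbb{C}$, diagonalizes the $C_n$-action on $\mathbb{C}^{n}$ into character eigenspaces $V_0, \ldots, V_{n-1}$, and uses the relabeling $V_i \mapsto V_{i+1}$ to argue that all $n$ isotypic components of $(\mathbb{C}^{n})^{\otimes k}$ have equal dimension, so the invariant one has dimension $n^{k}/n$. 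Your argument is more elementary and constructive (it hands you representatives with $i_1 = 1$ and an orthogonal indicator basis over $\mathbb{R}$ directly), and in fact the same freeness argument would also handle the $C_n \times C_n$ case of Theorem \ref{cnn_tensor}. What the paper's eigenspace route buys is the complex change of basis that is reused downstream: the decomposition into character spaces is exactly the $z_{a,b}$ coordinates underlying Theorem \ref{cnn_tensor} and the Davenport-constant argument in Theorem \ref{cnn}, so the paper's proof doubles as setup for that machinery.
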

\begin{theorem}\label{cnn_tensor}
The dimension of space of translation-invariant maps $\mathbb{R}^{(n^{2})^{k}} \to \mathbb{R}$ is equal to $n^{2k - 2}$. Here $\mathbb{R}^{n^{2}}$ is the space of $n\times n$ images with the action of translation group $C_n \times C_n$.
\end{theorem}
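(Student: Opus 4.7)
My plan is to reduce the dimension count to an orbit-counting problem for the diagonal action of $C_n \times C_n$ on $k$-tuples of index pairs, and then observe that this action is free, so the count collapses to a single division without any need for Burnside averaging.

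Index coordinates of $\mathbb{R}^{(n^2)^k}$ by $k$-tuples $\alpha = ((p_1,q_1), \ldots, (p_k,q_k))$ with $(p_j,q_j) \in (\mathbb{Z}/n)^2$, and write a linear map as $f(x) = \sum_\alpha a_\alpha x_\alpha$. A translation $(s,t) \in C_n \times C_n$ acts on a tuple by $(s,t)\cdot \alpha := ((p_1+s,q_1+t), \ldots, (p_k+s,q_k+t))$ modulo $n$. Exactly as in the setup of Section \ref{classification}, the translation-invariance of $f$ is equivalent to the fixed-point equations $\sum_\alpha a_\alpha x_\alpha = \sum_\alpha a_\alpha x_{(s,t)\cdot \alpha}$ for all $(s,t)$, which in turn is equivalent to $a$ being constant along orbits of this diagonal $C_n \times C_n$ action. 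Hence the dimension of the invariant subspace equals the number of orbits.

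Next I would verify that the diagonal action is free. If $(s,t)\cdot \alpha = \alpha$, then $p_j + s \equiv p_j$ and $q_j + t \equiv q_j \pmod n$ for every $j$, forcing $(s,t) = (0,0)$. By the orbit-stabilizer theorem, every orbit has size exactly $|C_n \times C_n| = n^2$, and since the total number of tuples is $n^{2k}$, the number of orbits is $n^{2k}/n^2 = n^{2k-2}$, as claimed.

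There is no real obstacle here; the whole proof is a one-paragraph specialization of the orbit-counting viewpoint used throughout the other classification theorems. The only structural input is the observation that simultaneously translating all $k$ index pairs by a nonzero $(s,t)$ can never fix a tuple, which makes the action free and lets us avoid the Cauchy--Frobenius average entirely. The analogous dimension count in Theorem \ref{cyclic} is a direct one-dimensional specialization of the same argument.
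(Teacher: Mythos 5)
Your proof is correct, but it takes a different route from the paper. You count orbits of the diagonal $C_n \times C_n$ action on $k$-tuples of pixel indices and observe that this action is free for $k \geq 1$ (any nonzero translation moves the first index pair), so every orbit has size exactly $n^2$ and the dimension is $n^{2k}/n^2 = n^{2k-2}$ with no Burnside averaging needed; the key structural fact is that the translation group acts simply transitively on the $n^2$ pixels, which is exactly why the count collapses to a clean division (and why nothing like this works for $S_n$, where Bell numbers appear instead). The paper instead factors the invariants as $\bigl((\mathbb{R}^{n}\otimes\mathbb{R}^{n})^{\otimes k}\bigr)^{C_n\times C_n} = (\mathbb{R}^{n^{k}})^{C_n} \otimes (\mathbb{R}^{n^{k}})^{C_n}$ and invokes Theorem \ref{cyclic}, whose own proof diagonalizes the cyclic action over $\mathbb{C}$ and uses a weight-shifting argument to show all weight spaces have equal dimension $n^{k-1}$. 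Your argument is more elementary and shorter, stays over $\mathbb{R}$, and as you note it re-proves Theorem \ref{cyclic} as the one-dimensional special case; the paper's route is less direct for the pure dimension count, but the diagonalization change of basis it develops is the same machinery reused in the proof of Theorem \ref{cnn} (the zero-sum / Davenport-constant connection), so it earns its keep elsewhere in the paper.
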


\section{Tensor Sizes for Function Approximation on Graphs}\label{maximal_tensor}
In the following section, we discuss an open question about function approximation on graphs. The question was raised in \cite{Maron2019} and \cite{keriven2019}.

Given a subgroup $H \subset S_n$, there is a construction of an $H$-invariant neural network that uses tensors of size up to $n(n-1)/2$ and achieves universal approximation property for $H$-invariant functions, see Theorem 3 in \cite{Maron2019}. When implementing a layer with tensors of size $n(n-1)/2$, the number of neurons reaches $n^{n(n-1)/2}$. Such scale is impractical, and further optimization of tensor sizes is needed.

Recall that an integer $T(G)$ is defined as the smallest bound on tensor sizes that allow continuous function approximation on graphs using Invariant Neural Network. We propose the following conjectures.
\begin{conjecture}[A]\label{conj_a}
For a graph $G$ with $n$ nodes, we have $T(G) \leq n$.
\end{conjecture}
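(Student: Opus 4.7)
The plan is to reduce to a statement in classical invariant theory via the connection from \cite{Maron2019}: $T(G)$ is bounded above by the maximum degree of a homogeneous generator of the invariant polynomial ring $\mathbb{R}[x_1,\ldots,x_n]^{\mathrm{Aut}\; G}$. Hence it suffices to prove that this ring is generated in degree at most $n$. This bound holds trivially for $\mathrm{Aut}\; G = S_n$ (classical symmetric function theory gives generators of degree $\leq n$) and for the trivial automorphism group, so the difficulty is concentrated in intermediate cases.

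My first move would be to exploit the orbit structure. Let $O_1,\ldots,O_r$ be the vertex orbits of $\mathrm{Aut}\; G$ and let $Y = S_{O_1}\times\cdots\times S_{O_r}$ be the associated Young subgroup. The inclusion $\mathrm{Aut}\; G \leq Y$ yields $\mathbb{R}[x]^Y \subseteq \mathbb{R}[x]^{\mathrm{Aut}\; G}$, and $\mathbb{R}[x]^Y$ is generated by the orbit power sums $p_j(O_i) = \sum_{\ell \in O_i} x_\ell^j$ for $1 \leq j \leq |O_i|$ --- all of degree at most $\max_i |O_i| \leq n$. Since $[Y : \mathrm{Aut}\; G]$ is finite, $\mathbb{R}[x]^{\mathrm{Aut}\; G}$ is a finitely generated module over $\mathbb{R}[x]^Y$, and the aim would be to select module generators of degree at most $n$. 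Using a G\"obel-style orbit-sum reduction, any invariant decomposes into $\mathrm{Aut}\; G$-orbit sums $\mathrm{orb}(x^\alpha)$, and I would attempt to show inductively that any such orbit sum of total degree greater than $n$ can be rewritten as a polynomial combination of lower-degree orbit sums and the power sums $p_j(O_i)$.

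The main obstacle is sharpening the within-orbit G\"obel bound from $\binom{|O_i|}{2}$ down to $|O_i|$. For a general transitive subgroup of $S_m$ the bound $\binom{m}{2}$ is known to be tight, so any argument specific to graph automorphism groups must use the fact that $\mathrm{Aut}\; G$ is compatible with a fixed edge set $E(G)$ rather than being an arbitrary transitive group. I would try to translate the constraints imposed on the orbits of $\mathrm{Aut}\; G$ acting on pairs of vertices --- forced by compatibility with a single $E(G)$ --- into additional combinatorial reductions on exponent multisets. For the abelian special cases of cyclic shifts and grid translations, this has been carried out in the paper via a zero-sum-sequence argument connecting to Davenport constants (Theorem \ref{cnn}), so a direct non-abelian analogue, possibly through characters of $\mathrm{Aut}\; G$ restricted to each orbit, is the plausible next step. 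This final sharpening is the conjectural ingredient and the likely source of any genuine difficulty.
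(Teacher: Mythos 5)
There is no proof of this statement for your attempt to be measured against: Conjecture A is posed in the paper as an open conjecture, supported only by experimental verification for all graphs with $n \leq 7$ and by partial results for special symmetry groups (the $S_{n_1}\times\cdots\times S_{n_m}$ case and commutative $\mathrm{Aut}\;G$, the latter via the zero-sum/Davenport mechanism of Theorem \ref{cnn}). Your proposal, as you yourself acknowledge, is also not a proof. The preparatory reductions are sound: the passage from $T(G)$ to degree bounds on generators of $\mathbb{R}[x_1,\ldots,x_n]^{\mathrm{Aut}\;G}$ via \cite{Maron2019}, the inclusion $\mathrm{Aut}\;G \leq Y = S_{O_1}\times\cdots\times S_{O_r}$ with $\mathbb{R}[x]^{Y}$ generated by orbit power sums of degree $\leq \max_i |O_i| \leq n$, and the module-finiteness of $\mathbb{R}[x]^{\mathrm{Aut}\;G}$ over $\mathbb{R}[x]^{Y}$. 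But these steps only relocate the conjecture: the entire content now sits in the final ``sharpening,'' for which no argument is offered, so the proposal reduces one open statement to another rather than closing it.

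Two concrete weaknesses in the plan itself. First, framing the obstacle as improving a \emph{within-orbit} G\"obel bound from $\binom{|O_i|}{2}$ to $|O_i|$ misstates where the difficulty lives: $\mathrm{Aut}\;G$ is in general only a subdirect product of its restrictions to the vertex orbits, so invariants (equivalently, module generators over $\mathbb{R}[x]^{Y}$) genuinely couple variables across orbits --- e.g.\ an involution swapping two vertices in each of several orbits produces invariants such as $x_a x_b + x_{a'} x_{b'}$ with $a,a'$ and $b,b'$ in different orbits, which are not generated by per-orbit invariants. A per-orbit bound (essentially Conjecture \ref{conj_b}, itself open) neither follows from nor directly yields control of the degrees of these coupled generators. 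Second, the proposed source of the sharpening --- exploiting compatibility with a fixed edge set, by analogy with the abelian case --- is only an analogy: the diagonalization that makes Theorem \ref{cnn} work has no counterpart for nonabelian $\mathrm{Aut}\;G$, and no alternative mechanism for converting edge-set compatibility into exponent-multiset reductions is specified. Your observation that the G\"obel-tight example $A_m$ is not a graph automorphism group in its natural action correctly signals that graph-representability must enter, but that is a heuristic, not a step. In short: a reasonable research plan identifying the standard tools, but with the decisive ingredient missing --- which is consistent with the statement remaining a conjecture in the paper.
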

\begin{conjecture}[B]\label{conj_b}
The value $T(G)$ is upper bounded by the maximal size of $\mathrm{Aut}\; G$ orbit.
\end{conjecture}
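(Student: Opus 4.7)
The plan is to reduce the statement to a bound on the generator degrees of the invariant polynomial ring $\mathbb{R}[x_1, \ldots, x_n]^{\mathrm{Aut}\; G}$, using the connection noted at the end of Section \ref{preliminaries} and attributed there to \cite{Maron2019}. Concretely, if this ring is generated by invariants of degree at most $d$, then the Stone--Weierstrass theorem combined with the Reynolds operator implies that every continuous $\mathrm{Aut}\; G$-invariant function can be uniformly approximated on compact sets by polynomial invariants of degree $\leq d$, and each such polynomial invariant is representable by an invariant neural network whose internal tensors have order $\leq d$. It therefore suffices to show that $\mathbb{R}[x_1, \ldots, x_n]^{\mathrm{Aut}\; G}$ is generated in degree at most $s_{\max} := \max_i |O_i|$, where $O_1, \ldots, O_k$ are the orbits of $\mathrm{Aut}\; G$ on $\{1, \ldots, n\}$.

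I would attack the degree bound orbit by orbit. The invariant ring is multigraded by the per-orbit degree vector $(d_1, \ldots, d_k)$, and every invariant is a sum of multi-homogeneous orbit sums $\sum_{\sigma \in \mathrm{Aut}\; G} \sigma \cdot m$ of monomials $m$. On the variables of a single orbit $O_i$ the group $\mathrm{Aut}\; G$ acts transitively, and the key sub-claim is that an orbit sum with $d_i > s_i$ can be rewritten as a polynomial combination of orbit sums with strictly smaller per-orbit degree. This is a refinement of G\"obel's staircase reduction for permutation invariants, where the generic bound $\binom{s_i}{2}$ should tighten to $s_i$ by combining transitivity with Newton-type power-sum identities and the structural constraint (satisfied by every graph automorphism group) that $\mathrm{Aut}\; G$ is $2$-closed. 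Iterating this reduction simultaneously in every orbit would produce a generating set whose per-orbit degrees are each at most the respective orbit size.

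The main obstacle is cross-orbit coupling. When $\mathrm{Aut}\; G$ is strictly smaller than the direct product $H_1 \times \cdots \times H_k$ of its orbit-wise projections, there exist genuinely mixed orbit sums that do not factor as products of single-orbit invariants, and whose total degree can in principle exceed $s_{\max}$. Bounding the minimal generating degree in this mixed regime by $s_{\max}$, rather than by the naive sum $\sum_i s_i = n$ that the orbit-by-orbit reduction alone gives, is the delicate step. My plan here is to mimic the strategy used in the proof of \Cref{cnn}: just as translation-invariant monomial sums on a $d \times d$ image grid were controlled by the Davenport constant of $C_d \times C_d$, I would look for a non-abelian generalization of a Davenport-type bound that controls, for sequences of orbit-labels appearing in a mixed monomial, the length at which a relation in $\mathrm{Aut}\; G$ forces a degree-reducing syzygy. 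Formulating the correct non-abelian zero-sum invariant and matching it against $s_{\max}$ is what I expect to be the hardest step, and is plausibly the reason this remains a conjecture rather than a theorem.
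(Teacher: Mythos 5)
The statement you are addressing is stated in the paper as a \emph{conjecture}: the authors give no proof, only experimental verification for all graphs with $n \leq 7$ (and, in a special case, for $\mathrm{Aut}\;G = S_{n_1} \times \cdots \times S_{n_m}$). So there is no paper proof to match against; your text has to stand on its own as a proof, and it does not. Your first reduction is sound and is exactly the connection the paper itself invokes: by \cite{Maron2019}, $T(G)$ is controlled by the maximal degree of a generating set of $\mathbb{R}[x_1,\ldots,x_n]^{\mathrm{Aut}\;G}$, so it would suffice to generate this ring in degree $\leq \max_i |O_i|$. Everything after that, however, is a plan rather than an argument, and the two load-bearing steps are both unestablished.

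First, your "key sub-claim" for a single orbit --- that a transitive permutation group $H$ acting on $s$ points has invariant ring generated in degree $\leq s$ --- is not a refinement you can wave through. G\"obel's theorem gives $\max\bigl(s, \binom{s}{2}\bigr)$, Noether gives $|H|$, and no general mechanism is known that tightens either to $s$; the appeal to "transitivity plus Newton-type power-sum identities plus $2$-closedness" is not backed by any lemma, and Newton identities only control the subring generated by power sums, which for a proper transitive subgroup of $S_s$ is far from the whole invariant ring. In fact this single-orbit sub-claim is already essentially Conjecture A (the case of one orbit of size $n$), which the paper also leaves open, so assuming it is circular relative to the problem. Second, you correctly identify the cross-orbit coupling as the hard part --- mixed orbit sums need not factor through single-orbit invariants, and the orbit-by-orbit reduction at best yields a bound of the form $\sum_i s_i = n$, not $\max_i s_i$ --- but you then explicitly defer it to a hypothetical non-abelian Davenport-type invariant that you neither define nor bound. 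The analogy with the proof of Theorem \ref{cnn} does not transfer as stated: that argument hinges on diagonalizing a commutative group action so that invariance of a monomial becomes a literal zero-sum condition in the group, and for a non-abelian $\mathrm{Aut}\;G$ there is no such simultaneous diagonalization, so no zero-sum reformulation is available. In short, the proposal is a reasonable research outline whose two central steps are precisely the open content of the conjecture; it should not be presented as a proof.
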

We experimentally verified that these conjectures hold for all graphs with $n \leq 7$.

We illustrate the relevance of these conjectures with a well-known application - universal approximation of translation-invariant functions with convolutional neural networks \cite{zhou2018}. Image classes are often assumed to be translation-invariant functions. If the input data is given by $d \times d$ images, then the translation group is a product of cyclic groups $C_d \times C_d$ (horizontal and vertical shifts). Note that we ignore rotations and reflections for simplicity.
\begin{theorem}\label{cnn}
Invariant neural networks with tensor layers of size up to $2d - 1$ can approximate any translation-invariant continuous function on $d \times d$ image data.
\end{theorem}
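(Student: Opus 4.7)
The plan is to reduce Theorem \ref{cnn} to a bound on the degrees of polynomial generators of the ring of translation-invariants on $\mathbb{R}^{d \times d}$, and then exploit a classical correspondence with zero-sum sequences in finite abelian groups. By Theorem 3 of \cite{Maron2019}, if the $\mathbb{R}$-algebra $\mathbb{R}[\{x_{i,j}\}]^{H}$ of $H$-invariant polynomials is generated by polynomials of degree at most $\beta$, where $H = C_d \times C_d$ acts by translations, then an invariant network using tensor layers of size at most $\beta$ uniformly approximates any continuous $H$-invariant function on any compact subset of $\mathbb{R}^{d \times d}$. Hence it suffices to show that this invariant ring is generated in degrees at most $2d - 1$.

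The main step is to diagonalize the $H$-action over $\mathbb{C}$ via the two-dimensional discrete Fourier transform
$$\hat{x}_{\alpha, \beta} = \sum_{i, j} \zeta^{\alpha i + \beta j} x_{i, j}, \qquad (\alpha, \beta) \in H,$$
with $\zeta$ a primitive $d$-th root of unity. In this basis, a translation $(a, b) \in H$ multiplies $\hat{x}_{\alpha, \beta}$ by the character $\zeta^{-(\alpha a + \beta b)}$, so a monomial $\prod_{t=1}^{\ell} \hat{x}_{\alpha_t, \beta_t}$ is $H$-invariant if and only if the multiset $\{(\alpha_t, \beta_t)\}_{t=1}^{\ell}$ is a zero-sum sequence in $H$, i.e. $\sum_t (\alpha_t, \beta_t) = 0$. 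The $\mathbb{C}$-algebra of invariants is spanned by these invariant monomials, and its minimal algebra generators correspond precisely to the indecomposable zero-sum sequences. The maximal length of such a sequence is by definition the Davenport constant $D(H)$, so the Noether number of $H$ over $\mathbb{C}$ equals $D(H)$.

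Olson's theorem \cite{davenport} gives $D(C_d \times C_d) = 2d - 1$, bounding the $\mathbb{C}$-generators in degree at most $2d - 1$. Real invariants can be extracted as real and imaginary parts of complex invariant monomials without raising the degree, so the same bound transfers to $\mathbb{R}$, and Theorem \ref{cnn} follows. The main obstacle is the first conceptual move: after the Fourier change of basis, translation-invariant monomials are in bijection with zero-sum sequences in $C_d \times C_d$. This observation is what replaces the generic bound $n(n-1)/2 = \Theta(d^4)$ from \cite{Maron2019} with the sharp value $2d - 1$; the quantitative formula $D(C_d \times C_d) = 2d - 1$ is a deep additive-combinatorial fact that we use as a black box.
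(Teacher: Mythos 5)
Your proposal is correct and follows essentially the same route as the paper's own proof: reduce to bounding the degrees of generators of the ring of $C_d \times C_d$-invariants via \cite{Maron2019}, diagonalize the translation action by a discrete Fourier change of basis, identify invariant monomials with zero-sum sequences in $C_d \times C_d$, and invoke the Davenport constant $D(C_d \times C_d) = 2d-1$ from \cite{davenport}. Your explicit remark on transferring the degree bound from $\mathbb{C}$ back to $\mathbb{R}$ (taking real and imaginary parts of complex invariant monomials) is a small point the paper leaves implicit, but it does not change the argument.
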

We provide the proof of the Theorem \ref{cnn} below. The first key step in the proof is a change of basis that diagonalizes the action of the commutative group $C_d \times C_d$. The second key step is the connection of invariance to the notion of zero-sum sequences and the Davenport constant of a group, an idea that seems new in the machine learning literature.
\begin{proof}[Proof of Theorem \ref{cnn}]
The group of translations $C_d \times C_d$ acting on images can be viewed as a subgroup of $S_{d^{2}}$. By Theorem 1 from \cite{Maron2019}, an invariant neural network can approximate any function invariant to $C_d \times C_d$. The same work shows an upper bound $d^2(d^2 - 1)/2$ on tensor sizes required for $C_d \times C_d$-invariant function approximation. Let us show that only tensors of size up to $2d - 1$ are required. From the proof of Theorem 1 in \cite{Maron2019} we know that it suffices to approximate generators in the ring of $C_d \times C_d$ invariant polynomials. The ring in question has $d^2$ variables $\mathbb{R} [ x_{1,1}, x_{1,2}, ..., x_{d, d - 1}, x_{d, d} ]$, and the action of $C_d \times C_d$ performs cyclic shifts on first and second indices of variables. The invariants of this action are not easy to analyze in basis $x_{i, j}$. We propose a change of basis that diagonalizes this action.  

Define a new basis $z_{a, b}$ as follows
$$z_{a,b} = \sum_{i=1}^{d}\sum_{j=1}^{d} x_{i, j}(\sqrt[d]{-1})^{a(i - 1) + b(j - 1)}.$$
Then the action of the translation $(p, q) \in C_d \times C_d$ on $z_{a, b}$ is multiplication by  $\sqrt[d]{-1}^{pa + qb}$, i.e. multiplication by a root of unity.

If a polynomial in variables $z_{a, b}$ is invariant to the action of $C_{d} \times C_{d}$, then every its monomial
$$\prod_{a, b = 1}^{d} z_{a, b}^{\alpha_{a, b}}$$
is also invariant, i.e. 
$$\sum_{a, b = 1}^{d} a \cdot \alpha_{a, b} = \sum_{a, b = 1}^{d} b \cdot \alpha_{a, b} = 0.$$
This relation is a zero-sum relation on a sequence of length $\sum \alpha_{a, b}$ that contains $\alpha_{a, b}$ elements $(a, b)$ from the group $C_d \times C_d$. We conclude that finding invariant tensor layers for Convolutional Neural Networks is equivalent to finding zero-sum sequences in the group $C_d \times C_d$.

Davenport constant of a group $G$ is defined as the maximal length of a sequence of elements from $G$ that contain no zero-sum subsequence. Davenport constant of the group $C_{d} \times C_{d}$ was computed before \cite{davenport} and is equal to $2d - 1$. It follows that any sequence of length $> 2d - 1$ contains a non-empty zero-sum subsequence. In terms of invariant monomials, it means that any invariant monomial of degree $2d$ and above can be written as a product of invariant monomials of degrees $\leq 2d - 1$. Hence all generators of the ring of $C_d \times C_d$-invariants have degrees $\leq 2d - 1$. By a connection established in the works \cite{yarotsky}, \cite{Maron2019}, we conclude that $2d - 1$ is an upper bound on the required tensor size in Convolutional Neural Networks.
\end{proof}

\section{Experiments}
\label{experiments}

To support the claim that our invariant/equivariant layer design improves learning on graphs with different node types, we consider open benchmarks with tasks that require learning interactions between groups of nodes, such as subgraph tasks. We compare our models to three recent architectures that achieved state-of-the-art or close results: SubGNN \cite{alsentzer2020}, GLASS \cite{glass}, GNN-Seg (treating a single group of nodes while ignoring the rest of the graph).

The training process in all experiments uses Adam optimizer \cite{adam} and ReduceLROnPlateau learning rate scheduler. The number of iterations in training is bounded by 10000, and early stopping is performed based on a non-increase of the validation data score for 1000 iterations. The models were implemented with Pytorch \cite{fey}.

\subsection{Real datasets}
We evaluate the model performance on four real-world datasets with two node types: ppi-bp, em-user, hpo-metab, hpo-neuro, with 80:10:10 training, validation, and test split.

We use the node labeling trick with Message Passing Neural Network architecture similar to GLASS. We add two more layers: $S_{n_1} \times S_{n_2}$-equivariant layer $\mathbb{R}^{n} \to \mathbb{R}^{n}$ and $S_{n_1} \times S_{n_2}$-invariant $\mathbb{R}^{n} \to \mathbb{R}$ layer instead of sum or average pooling.

The proposed model achieves close to state-of-the-art results on 3 out of 4 datasets. We note that the model variance is noticeably higher. One possible explanation is the high sensitivity caused by added learnable mappings.

\begin{table}[H]
    \centering
    \begin{tabular}{|c|c|c|c|c|}
        \hline
        Task & ppi-bp & hpo-metab  \\ \hline
        GLASS & $\mathbf{0.619 \pm 0.007}$ & $\mathbf{0.614\pm 0.005}$  \\ \hline
        SubGNN & $0.599\pm 0.008$ & $0.537\pm 0.008$ \\ \hline
        GNN-Seg & $0.361 \pm 0.008$ & $0.542\pm 0.009$  \\ \hline
        \textbf{This work} & $\mathbf{0.625 \pm 0.017}$ & $\mathbf{0.611 \pm 0.024}$ \\ \hline
    \end{tabular}
    
    \vspace{0.3cm}
    
    \begin{tabular}{|c|c|c|c|c|}
        \hline
        Task  & hpo-neuro & em-user \\ \hline
        GLASS & $\mathbf{0.685\pm 0.005}$ & $0.888\pm 0.006$ \\ \hline
        SubGNN & $0.644 \pm 0.006$ & $0.816\pm 0.13$ \\ \hline
        GNN-Seg & $0.647 \pm 0.001$ & $0.725 \pm 0.003$ \\ \hline
        \textbf{This work} & $0.661 \pm 0.009$ & $\mathbf{0.901 \pm 0.029}$ \\ \hline
    \end{tabular}
    \caption{Mean and standard deviation of Micro-F1 score over 10 runs. Baselines are taken from \cite{glass}.}
    \label{tab:synthetic}
\end{table}

\subsection{Synthetic datasets}
We use four synthetic datasets introduced in \cite{alsentzer2020}: density, cut ratio, coreness, and component. We follow the 50:25:25 training, validation, and test split as in \cite{alsentzer2020}. Our model for synthetic data follows an invariant neural network architecture. In particular, we use three equivariant $\mathbb{R}^{n} \to \mathbb{R}^{n}$ layers, followed by an invariant $\mathbb{R}^{n} \to \mathbb{R}$ pool layer and Muli-Layer-Perceptron. The vector-form implementation of the used layers is given in Appendix \ref{appC}.

We compare the performance of the state-of-the-art models and our model on these tasks, see table \ref{tab:synthetic}. The proposed model achieves state-of-the-art or similar results on 4 out of 4 synthetic datasets.

\begin{table}[H]
    \centering
    \begin{tabular}{|c|c|c|c|c|}
        \hline
        Task & density & cut ratio  \\ \hline
        GLASS & $0.930\pm 0.009$ & $\mathbf{0.935 \pm 0.006}$ \\ \hline
        SubGNN & $0.919\pm 0.006$ & $0.629\pm 0.013$  \\ \hline
        GNN-Seg & $\mathbf{0.952 \pm 0.006}$ & $0.346 \pm 0.011$ \\ \hline
        \textbf{This work} & $\mathbf{0.949 \pm 0.008}$ & $\mathbf{0.947\pm 0.008}$ \\ \hline
    \end{tabular}

    \vspace{0.3cm}
    
    \begin{tabular}{|c|c|c|c|c|}
        \hline
        Task  & coreness & component \\ \hline
        GLASS & $0.840\pm 0.009$ & $\mathbf{1.000 \pm 0.000}$ \\ \hline
        SubGNN  &  $0.659 \pm 0.031$ & $0.958 \pm 0.032$ \\ \hline
        GNN-Seg &  $0.593 \pm 0.012$ & $\mathbf{1.000 \pm 0.000}$ \\ \hline
        \textbf{This work} & $\mathbf{0.847 \pm 0.011}$ & $\mathbf{1.000\pm 0.000}$ \\ \hline
    \end{tabular}
    
    \caption{Mean and standard deviation of Micro-F1 score over 10 runs. Baselines are taken from \cite{glass}.}
    \label{tab:synthetic}
\end{table}

\section{Conclusion}
In this work, we presented a complete classification of linear tensor layers invariant to permutations of nodes of the same type. We experimentally verified the performance improvement these layers show on real and synthetic tasks. New steps have been made to further bound the size of tensors required for function approximation on graph data. In particular, when treating image data as graph data, we obtained tight bounds on the sizes of invariant convolutional tensor layers.

\section{Acknowledgements}
The work of Z.-Q. Luo was supported in part by the National Key Research and Development Project under grant 2022YFA1003900, and in part by the Guangdong Provincial Key Laboratory of Big Data Computing.

\nocite{langley00}

\bibliography{example_paper}
\bibliographystyle{icml2022}

\newpage
\appendix
\onecolumn

\section{Proofs of Main Theorems}\label{appA}
\begin{proof}[Proof of Theorem \ref{labeldim}]
Decompose the space $\mathbb{R}^{n}$ into a direct sum of subspaces where permutations $S_{n_1} \times ... \times S_{n_m}$ act,
$$\mathbb{R}^{n} = U_1 \oplus U_2 \oplus ... \oplus U_m.$$
Rewrite the tensor product
$$(U_1 \oplus U_2 \oplus ... \oplus U_m)^{\otimes k}$$
into multinomial sum, see \cite{Fulton2004},
$$\bigoplus\limits_{k_1, ..., k_m} \binom{k}{k_1, ..., k_m} U_{1}^{\otimes k_1} \otimes ... \otimes U_{m}^{\otimes k_m}.$$
For example,
$$(U_1 \oplus U_2)^{\otimes 2} = (U_1 \otimes U_1) \oplus (U_1 \otimes U_2) \oplus (U_2 \otimes U_1) \oplus (U_2 \otimes U_2).$$

Using the result of \cite{Maron2018}, we note that the dimension of $S_{n_{j}}$ invariants of $U_{j}^{\otimes k_{j}}$ is equal to $B(k_{j})$. Hence the dimension of $S_{n_1} \times ... \times S_{n_m}$ invariants is given by the sum
$$\sum\limits_{k_1 + ... + k_m = k} \binom{k}{k_1, ..., k_m} B(k_1)B(k_2) ... B(k_m).$$
The expression above is known in the theory of exponential generating functions \cite{Stanley}, see Lemma \ref{suppl1}. we conclude that this sum appears as a coefficient in front of $x^{k}/k!$ in the series
$$\left( \sum_{i=0}^{\infty} B(k)\frac{x^{k}}{k!}\right)^{m} = \left(e^{e^{x} - 1} \right)^{m} = e^{m(e^{x} - 1)}.$$
For the claim about equivariant maps $\mathbb{R}^{n^{k}} \to \mathbb{R}^{n^{d}}$ see Lemma \ref{suppl2}.
\end{proof}

\begin{proof}[Proof of Theorem \ref{explicitform}]
Consider the set of vectors in $\mathbb{R}^{n^{k}}$ obtained by the following procedure
\begin{enumerate}
    \item For each node type $j$ select a subset of $T_j$ of indices from $\{1, ..., k\}$, in such a way that 
    $$\bigsqcup_{j=1}^{m} T_j = \{1, ..., k\}.$$
    \item For each subspace $\mathbb{R}^{n^{|T_j|}}$ we select a basis element $B_{\gamma_j}$, where $\gamma_j$ is a partition of $T_j$, according to a construction of the basis in \cite{Maron2018}. The basis element $B_{\gamma_1, ..., \gamma_m}$ is then formed by taking the tensor product of all vectors $B_{\gamma_j}$, with $B_{\gamma_j}$ located at indices $T_j$.
\end{enumerate}
On the one hand, there are
$$\sum\limits_{k_1 + ... + k_m = k} \binom{k}{k_1, ..., k_m} B(k_1)B(k_2) ... B(k_m)$$
vectors in this set. On the other hand, they are orthogonal to each other. Indeed, assume that two elements $B_{\gamma_1, ..., \gamma_m}$ and $B_{\beta_1, ..., \beta_m}$ share a common non-zero coefficient in front of some element $e_{i_1} \otimes ... \otimes e_{i_k}$. It follows that $T_{j}$ can be defined as the set of indices $t$ such that node $i_t$ has type $j$. But then, by the definition of $B_{\gamma_j}$, the element $\bigotimes_{t \in T_j} e_{i_t}$ uniquely defines an equivalence class (partition) $\gamma_j$. Hence $\gamma_j = \beta_j$ for all $j$.

\end{proof}

\begin{proof}[Proof of Theorem \ref{cyclic}]
The basis of cyclically invariant $k$-tensors can be obtained by projecting $k$-tensors of the form $e_{i_1} \otimes ... \otimes e_{i_k}$ on the invariant subspace using the averaging operator 
$$\frac{1}{n}\sum_{g \in C_n} g \cdot e_{i_1} \otimes ... \otimes e_{i_k}.$$
It follows that bases and dimensions of cyclically-invariant subspaces in $\mathbb{R}^{n^{k}}$ and over $\mathbb{C}^{n^{k}}$ are the same.

The cyclic action of $C_n$ on $\mathbb{C}^{n}$ can be diagonalized, resulting in decomposition
$$\mathbb{C}^{n} = \bigoplus_{i=0}^{n-1} V_{i},$$
where the cyclic action on $V_k$ is multiplication by $(\sqrt[n]{-1})^{k}$. Let $d$ be the dimension of the invariant subspace in
$$(\mathbb{C}^{n})^{\otimes k} = (\bigoplus_{i=0}^{n-1} V_{i})^{\otimes k}.$$
The shift $V_0 \mapsto V_1$, $V_1 \mapsto V_2$, ..., $V_{n - 1} \mapsto V_0$ does not change the  decomposition but maps the invariant subspace to subspace where cyclic action is multiplication by $\sqrt[n]{-1}$. It follows that this subspace also has dimension $d$. Repeating the argument we arrive at $d + d + ... + d = n^{k}$, hence $d = n^{k}/n = n^{k - 1}$.
\end{proof}

\begin{proof}[Proof of Theorem \ref{cnn_tensor}]
An $n \times n$ image is a $2$-tensor from $\mathbb{R}^{n} \otimes \mathbb{R}^{n}$. Vertical translations act on the first entry of a $2$-tensor while horizontal translations act on the second. The maps $(\mathbb{R}^{n} \otimes \mathbb{R}^{n})^{\otimes k}$ invariant to translations are then computed as
$$((\mathbb{R}^{n} \otimes \mathbb{R}^{n})^{\otimes k})^{C_n \times C_n} = (\mathbb{R}^{n^{k}})^{C_n} \otimes (\mathbb{R}^{n^{k}})^{C_n}.$$
Since cyclic invariants are computed in Theorem \ref{cyclic}, the dimension of last tensor product is equal to $n^{k - 1} \cdot n^{k - 1} = n^{2k - 2}$.
\end{proof}

\section{Supplementary Lemmas}\label{appB}
\begin{lemma}\label{suppl1}
Let $(a_n)$ be a sequence and let $f(x)$ be the exponential generating function of that sequence.
$$f(x) = \sum_{n = 0}^{\infty} a_n \frac{x^{n}}{n!}.$$
Then $f(x)^{m}$ is an exponential generating function for the sequence
$$b_k = \sum\limits_{k_1 + ... + k_m = k} \binom{k}{k_1, ..., k_m} a_{k_1} a_{k_2} ... a_{k_m}.$$
\end{lemma}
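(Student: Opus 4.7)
The plan is to prove the lemma by direct expansion of the $m$-fold product, treating the statement as a straightforward generalization of the binomial convolution rule for exponential generating functions. First I would write
\[
f(x)^{m} = \left(\sum_{n \geq 0} a_n \frac{x^{n}}{n!}\right)^{m} = \sum_{k_1, \ldots, k_m \geq 0} \frac{a_{k_1} a_{k_2} \cdots a_{k_m}}{k_1!\, k_2! \cdots k_m!}\, x^{k_1 + k_2 + \cdots + k_m},
\]
which is valid either as an identity of formal power series or, under suitable convergence, as a genuine product of analytic functions. The only manipulation needed is Fubini-type rearrangement of a sum indexed by $m$-tuples, which is legitimate in the ring of formal power series.

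Next I would group the summands by the value of $k = k_1 + \cdots + k_m$. Factoring out $1/k!$ and multiplying through by $k!$ inside the inner sum converts the product of factorials in the denominator into the multinomial coefficient:
\[
f(x)^{m} = \sum_{k \geq 0} \frac{x^{k}}{k!} \sum_{k_1 + \cdots + k_m = k} \frac{k!}{k_1!\, k_2! \cdots k_m!}\, a_{k_1} a_{k_2} \cdots a_{k_m} = \sum_{k \geq 0} b_k \frac{x^{k}}{k!},
\]
which exhibits $f(x)^{m}$ as the exponential generating function of $(b_k)$ and completes the argument.

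As an alternative, one could prove the lemma by induction on $m$: the base case $m = 1$ is trivial, and the inductive step reduces to the standard binomial convolution identity for a product of two exponential generating functions, $(fg)_k = \sum_{i + j = k}\binom{k}{i} (f)_i (g)_j$, combined with the identity $\binom{k}{j}\binom{k-j}{k_1, \ldots, k_{m-1}} = \binom{k}{j, k_1, \ldots, k_{m-1}}$. I do not expect any real obstacle: the computation is entirely routine, and the only point requiring mild care is verifying that the rearrangement of the multi-indexed sum is justified, which is automatic at the level of formal power series and harmless analytically since the sums are absolutely convergent on the domain where $f$ converges.
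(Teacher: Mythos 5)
Your proposal is correct and follows essentially the same route as the paper's proof: expand the $m$-fold product, group terms by $k = k_1 + \cdots + k_m$, and recognize the multinomial coefficient after factoring out $x^k/k!$. The remarks about formal-power-series rearrangement and the alternative induction are fine but add nothing beyond the paper's argument.
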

\begin{proof}
We start by expanding $f(x)^m$:

$$f(x)^m = \left(\sum_{n = 0}^{\infty} a_n \frac{x^n}{n!}\right)^m =$$
$$= \sum_{k_1 = 0}^{\infty} a_{k_1} \frac{x^{k_1}}{k_1!} \sum_{k_2 = 0}^{\infty} a_{k_2} \frac{x^{k_2}}{k_2!} \cdots \sum_{k_m = 0}^{\infty} a_{k_m} \frac{x^{k_m}}{k_m!} =$$
$$= \sum_{k_1 = 0}^{\infty} \sum_{k_2 = 0}^{\infty} \cdots \sum_{k_m = 0}^{\infty} a_{k_1} a_{k_2} \cdots a_{k_m} \frac{x^{k_1 + k_2 + \cdots + k_m}}{k_1!k_2!\cdots k_m!} =$$
$$= \sum_{k = 0}^{\infty} \left(\sum_{k_1 + k_2 + \cdots + k_m = k} a_{k_1} a_{k_2} \cdots a_{k_m} \frac{k!}{k_1!k_2!\cdots k_m!}\right) \frac{x^k}{k!} =$$
$$= \sum_{k = 0}^{\infty} b_k \frac{x^k}{k!}.$$
where the last step follows from the definition of $b_k$. This shows that $f(x)^m$ is an exponential generating function for the sequence $(b_k)$. Thus, we have proved Lemma \ref{suppl1}.
\end{proof}

\begin{lemma}\label{suppl2}
The dimension of space of $S_{n_1} \times ... \times S_{n_m}$-equivariant maps $\mathbb{R}^{n^{k}} \to \mathbb{R}^{n^{d}}$ depends only on $k + d$.
\end{lemma}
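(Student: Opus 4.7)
The plan is to convert the equivariance problem into an invariance problem by using the standard adjunction $\mathrm{Hom}(V,W) \cong V^* \otimes W$, and then to observe that the tensor representation on $\mathbb{R}^{n^k}$ is self-dual for permutation actions, which makes the right-hand side isomorphic to $\mathbb{R}^{n^{k+d}}$ as a representation of $S_{n_1} \times \cdots \times S_{n_m}$. Once this identification is in place, Theorem \ref{labeldim} applies to compute the dimension of the invariant subspace, and the answer depends only on $k+d$.

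In more detail, I would first recall that a linear map $f: V \to W$ between two representations of a group $H$ is $H$-equivariant if and only if it is fixed by the natural $H$-action on $\mathrm{Hom}(V,W)$ given by $(h \cdot f)(v) = h f(h^{-1} v)$. Thus the space of $H$-equivariant maps is canonically isomorphic to the $H$-invariant subspace of $\mathrm{Hom}(V,W) \cong V^* \otimes W$. In our setting $V = \mathbb{R}^{n^k}$ and $W = \mathbb{R}^{n^d}$ with $H = S_{n_1} \times \cdots \times S_{n_m}$ acting diagonally on each tensor factor.

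The second step is to note that the defining representation of $H$ on $\mathbb{R}^n$ is given by permutation matrices, which are orthogonal, so $(\mathbb{R}^n)^* \cong \mathbb{R}^n$ as $H$-representations (an explicit isomorphism is given by the standard inner product, which identifies the basis vector $e_i$ with its dual $e_i^*$). Taking tensor powers preserves isomorphism, so $(\mathbb{R}^{n^k})^* \cong \mathbb{R}^{n^k}$ as $H$-representations, and therefore
\[
V^* \otimes W \;\cong\; \mathbb{R}^{n^k} \otimes \mathbb{R}^{n^d} \;\cong\; \mathbb{R}^{n^{k+d}}
\]
with its natural diagonal $H$-action. Hence the dimension of the space of $H$-equivariant maps $\mathbb{R}^{n^k} \to \mathbb{R}^{n^d}$ equals the dimension of the $H$-invariant subspace of $\mathbb{R}^{n^{k+d}}$, which by Theorem \ref{labeldim} is the coefficient of $x^{k+d}/(k+d)!$ in $e^{m(e^x - 1)}$ and depends only on $k+d$.

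There is no real obstacle here; the only subtlety to spell out carefully is the self-duality of the permutation representation, which should be stated explicitly rather than used implicitly, since the argument would fail for a group acting by non-orthogonal matrices. With that point made clearly, the proof amounts to a clean two-step reduction followed by an invocation of the previously proved classification.
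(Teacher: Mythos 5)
Your proposal is correct and follows essentially the same route as the paper: identify the equivariant maps with the invariant subspace of $\mathrm{Hom}(V^{\otimes k}, V^{\otimes d}) \cong V^{*\otimes k} \otimes V^{\otimes d}$, use self-duality of the permutation representation to replace the dual factor, and reduce to invariants of $(k+d)$-tensors. Your justification of self-duality via orthogonality of permutation matrices is a slightly more explicit variant of the paper's appeal to real-valued characters, but it is the same argument in substance.
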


\begin{proof}
From the point of view of tensor algebra, the computation of $G$-equivariant layers can be viewed as the computation of $G$-equivariant linear maps
$$\mathrm{Hom}_{G}(V^{\otimes k}, V^{\otimes d}).$$

Representation theory of symmetric group $S_n$ is well-studied. In particular, it is known \cite{Fulton2004} that all characters of $S_{n_1} \times ... \times S_{n_m}$ are real-valued. Hence action on the dual space $V^{*\otimes d}$ is equivalent to the action on the original space $V^{\otimes d}$. Hence
$$\mathrm{Hom}_{S_{n}}(V^{\otimes k}, V^{\otimes d}) = (V^{\otimes k} \otimes V^{*\otimes d})^{S_n} = (V^{\otimes k + d})^{S_n}.$$
This shows that the answer can depend only on $k + d$. 
\end{proof}

\section{Implementation}\label{appC}
Let $K_1, K_2, ..., K_m$ be sets of nodes from groups $1$ to $m$,
$$\bigsqcup_{i=1}^{m} K_i = \{1, 2, ..., n\}.$$
Denote by $1_{K_1}, ..., 1_{K_m}$ the $n$-dimensional vectors with $1_{K_i}$ having coordinate $1$ only at indices $K_i$. And let $I_{K_i}$ be an identity matrix with ones only at indices from $K_i$. 

An $S_{n_1} \times ..., \times S_{n_m}$-invariant layer $L: \mathbb{R}^{n} \to \mathbb{R}$ has a form
$$L(x) = \sum_{i=1}^{m} w_i 1_{K_i}^{t} x,$$
where $w_i \in \mathbb{R}$ are learnable parameters.

An $S_{n_1} \times ..., \times S_{n_m}$-equivariant layer $L: \mathbb{R}^{n} \to \mathbb{R}^{n}$ has a form
$$L(x) = \sum_{i, j = 1}^{m} w_{i, j} (1_{K_i}^{t} x) 1_{K_j} + \sum_{i = 1}^{m} w_{i} I_{K_i} x,$$
where $w_{i,j}$ and $w_i$ are learnable parameters.

\end{document}